\documentclass[conference,letterpaper]{IEEEtran2026}
\usepackage[left=0.74in,right=0.64in,top=0.7in,bottom=0.95in]{geometry}

\IEEEoverridecommandlockouts
\newcommand{\Latexfilepath}{.}
\usepackage{\Latexfilepath/Bermudez}

\begin{document}

\title{Machine Unlearning \\ for Gibbs Supervised Learning Algorithms}

 \author{
   \IEEEauthorblockN{
   Yaiza Bermudez\IEEEauthorrefmark{1}, 
   Samir M. Perlaza\IEEEauthorrefmark{1}\IEEEauthorrefmark{2}\IEEEauthorrefmark{4}, and
   I\~{n}aki Esnaola\IEEEauthorrefmark{3}\IEEEauthorrefmark{4}\\
   Emails: name.lastname@inria.fr and esnaola@sheffield.ac.uk
                     }
   \IEEEauthorblockA{\IEEEauthorrefmark{1}%
                     Centre Inria d'Université Côte d'Azur, INRIA, 
                     Sophia Antipolis, France.}
   \IEEEauthorblockA{\IEEEauthorrefmark{2}%
                     Laboratoire GAATI, Universit\'{e} de la Polyn\'{e}sie fran\c{c}aise, Fa`a`\={a}, French Polynesia.}
   \IEEEauthorblockA{\IEEEauthorrefmark{3}%
                     School of Electrical and Electronic Engineering, University of Sheffield, Sheffield,
United Kingdom.}   
   \IEEEauthorblockA{\IEEEauthorrefmark{4}%
		     ECE Dept. Princeton University, Princeton, 08544 NJ, USA.
\thanks{This work is supported in part by the European Commission through the H2020-MSCA-RISE-2019 project 872172; the French National Agency for Research (ANR)  through the Project ANR-21-CE25-0013 and the project ANR-22-PEFT-0010 of the France 2030 program PEPR Réseaux du Futur; and in part by the Agence de l'innovation de défense (AID) through the project UK-FR 2024352. }                     } 
  }
 \maketitle 

\begin{abstract} 
In this paper, a method for achieving exact unlearning for Gibbs supervised learning algorithms is proposed using a variational formulation inspired by empirical risk minimization subject to relative entropy regularization (ERM-RER). Such a method consists of maximizing the expected empirical risk over the dataset to be unlearned subject to a regularization by relative entropy with respect to the original algorithm. The optimization variable is a probability measure on the models; and the solution is another Gibbs probability measure that represents a new Gibbs supervised learning algorithm.
The method guarantees exact unlearning in the sense that the new Gibbs algorithm coincides  in distribution with the algorithm  that would have been obtained by retraining from scratch on the  dataset to be retained. 
As a byproduct, a framework for reweighting data points in ERM-RER by strategically choosing both the reference measure and the regularization factor is obtained. In this framework, exact unlearning is the special case in which zero-weight is assigned to the contribution of the data points to be unlearned. More generally, depending on the  choice of certain parameters, data points can be up-weighted or down-weighted in ERM-RER problems for particular purposes, e.g., controlling the generalization error of Gibbs algorithms. This paves the way for new constructive or adversarial views on classical reweighting data points in ERM-RER.  
\end{abstract}
\enlargethispage{-0.10in}
\vspace{-1.2ex}
\section{Introduction}

Modern learning systems are routinely trained on data that may later become unavailable or impermissible to use, e.g., due to user requests, contractual constraints, or regulatory requirements such as the right to erasure in Article~17 of the GDPR~\cite{gdpr2016}, also known as the ``Right to be Forgotten".  In this context, \emph{machine unlearning} refers to the task of removing the influence of a part of the training dataset from a learning algorithm while avoiding the computational cost of retraining from scratch \cite{Nguyen2025Unlearning,Sai2024Machine,Xu2024Solutions}. Early formulations of data deletion in learning formalize this task as producing an updated algorithm that is statistically consistent with training on the retained data only~\cite{ginart2019making,cao2015towards}. 
A particularly stringent requirement is \emph{exact unlearning}, in which the distribution of the post-unlearning algorithm is required to be identical to the algorithm obtained by retraining on the dataset to be retained~\cite{cao2015towards}. \vspace{-1ex}These guarantees are difficult to obtain and practical approaches often trade computational efficiency for approximate removal~\cite{bourtoule2021machine,Egger2025ModelSplitting,Jiang2024FedUHB, jiang2025certifying}, often referred to as \emph{approximate unlearning}~\cite{guo2020certified}.

\vspace{-0.7ex}
In this work, exact unlearning is formulated as a relative-entropy-regularized variational problem, which allows using existing tools previously developed in the framework of empirical risk minimization with $f$-divergence regularization \cite{perlaza2024empirical, zou2024Generalization, daunas2025asymmetry, perlaza2023validation, daunas2024equivalence}. 
The main contribution of this work is an exact unlearning method for Gibbs algorithms. More specifically, the method consists in implementing a Gibbs algorithm via the maximization of the empirical risk with respect to the dataset to be unlearned using as reference measure the Gibbs algorithm from which data must be unlearned.  
Interestingly, this method guarantees exact unlearning in the sense that the post-unlearning algorithm coincides with the solution to an ERM-RER problem with respect to the retained training dataset.
The key observation is that  exact unlearning is achieved thanks to the strategic choice of both the reference measure, as in \cite{bermudez2026decentralized}; and the regularization factor, as in \cite{Bu2024Towards,medina2022robustness,ray2023asymptotics}. This choice results in an effect that can be assimilated to a reweighting of the contribution of the data points to be unlearned. In particular, unlearning consists in assigning zero-weight to those data points. From this perspective, the results presented here can be placed in a more general scenario than unlearning.  
In particular, it can be placed in the scenario of reweighting data points in ERM-RER.  %
More generally, this reweighting might be oriented to decrease the generalization error~\cite{perlaza2024generalization} of Gibbs algorithms  by controlling the contribution of particular data points. On the other hand, reweighting data points in ERM-RER might be performed adversarially. In this case, the objective might be to increase the generalization error of Gibbs algorithms by up-weighting the outliers within the training dataset. 

\enlargethispage{-0.11in}
The paper is organized as follows. Section~\ref{SecSupervisedMachineLearning} introduces the notation and formalizes the supervised learning setting. Section~\ref{SecGibbsAlgorithms} defines Gibbs conditional probability measures and their characterizations via ERM-RER optimization problems. Section~\ref{SecMainResult} introduces the main result, which consists in an exact unlearning method for Gibbs algorithms. Finally, Section~\ref{SecConclusion} concludes the paper, and discusses open research paths.

\section{Supervised Machine Learning}
\label{SecSupervisedMachineLearning}

Let~$\set{M}$,~$\set{X}$ and~$\set{Y}$, with~$\set{M} \subseteq \reals^{d}$ and~$d \in \ints$, be sets of \emph{models}, \emph{patterns}, and \emph{labels}, respectively.  
The training data available is assumed to be partitioned into~$K$ smaller datasets. For all~$k \inCountK{K}$, the~$k$-th dataset consists of~$n_k$ data points~$(x_{k,1}, y_{k,1}),\ldots,(x_{k,n_k}, y_{k,n_k})$, which are elements of the set~$\set{Z} \triangleq \set{X} \times \set{Y}$. Such data points form the dataset~$\vect{z}_k \in \set{Z}^{n_k}$, which can be explicitly written as  
\begin{IEEEeqnarray}{rCl}
\label{EqDatasetK}
\vect{z}_k & \triangleq & \autoparent{(x_{k,1}, y_{k,1}), \ldots, (x_{k,n_k}, y_{k,n_k})}\in \set{Z}^{n_k}.
\vspace{-1ex}
\end{IEEEeqnarray}
The aggregated training dataset is then defined explicitly as,
\begin{IEEEeqnarray}{rCl}
\label{EqDatasetZero}
\vect{z}_0 & \triangleq & \autoparent{\vect{z}_1, \ldots, \vect{z}_K } \in \set{Z}^{n_1} \times \set{Z}^{n_2} \times \dots \times \set{Z}^{n_K},
\end{IEEEeqnarray}
with size~$n_0 \triangleq \sum_{k=1}^{K} n_k$ data points.
Given a model~$\vect{\theta} \in \set{M}$, the loss induced by such a model with respect to a data point~$\autoparent{x,y} \in \set{Z}$ is~$\ell(x, y, \vect{\theta})$, where the function 
\begin{IEEEeqnarray}{rCl}
\label{EqLossFunction}
\ell: \set{Z} \times \set{M}\to [0, +\infty),
\vspace{-1ex}
\end{IEEEeqnarray}
is referred to as the \emph{loss function} and  is assumed to be Borel measurable.
Using such a loss function, the \emph{empirical risk} induced by a model~$\vect{\theta} \in \set{M}$, with respect to a dataset~$\vect{z}_k \in \set{Z}^{n_k}$, with~$k \in \{0,1,\ldots,K\}$,  is determined by the function
\begin{IEEEeqnarray}{rCl}
\label{EqEmpiricalRiskk}
\mathsf{L}_k:  \function{\set{Z}^{n_k} \times \set{M}}{[0, +\infty)}{\autoparent{\vect{z}_k, \vect{\theta}}}{\frac{1}{n_k}\sum_{i=1}^{n_k}  \ell\left(x_{k,i}, y_{k,i}, \vect{\theta} \right).}
\end{IEEEeqnarray}
The functions~$\mathsf{L}_0, \mathsf{L}_1, \ldots, \mathsf{L}_K$ exhibit the following property.
\begin{lemma}
\label{LemmaJan9at11h49in2026Sophia}
Consider the training dataset~$\vect{z}_0$ in~\eqref{EqDatasetZero}, then the empirical risk functions~$\mathsf{L}_0, \mathsf{L}_1, \ldots, \mathsf{L}_K$ in~\eqref{EqEmpiricalRiskk} satisfy for all~$\vect{\theta} \in \set{M}$,
\vspace{-3ex}
\begin{IEEEeqnarray}{rCl}
\label{EqJan6at09h42in2026inSophia}
\mathsf{L}_0\autoparent{\vect{z}_0, \vect{\theta}} =  \sum_{k=1}^{K} \frac{n_k}{n_0} \mathsf{L}_k \autoparent{\vect{z}_k, \vect{\theta}},
\vspace{-1ex}
\end{IEEEeqnarray}
\end{lemma}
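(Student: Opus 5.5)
The identity follows by expanding both sides with the definition of the empirical risk in \eqref{EqEmpiricalRiskk} and regrouping a single finite sum. Fix an arbitrary model $\vect{\theta} \in \set{M}$; since the claim is pointwise in $\vect{\theta}$, it suffices to verify it for this fixed model.

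The first step is to make the indexing of $\vect{z}_0$ explicit. By \eqref{EqDatasetZero}, $\vect{z}_0$ is the concatenation $(\vect{z}_1,\ldots,\vect{z}_K)$. Hence its $n_0 = \sum_{k=1}^{K} n_k$ data points are exactly the points $(x_{k,i},y_{k,i})$ with $k \in \{1,\ldots,K\}$ and $i \in \{1,\ldots,n_k\}$, each appearing once. Concretely, the $j$-th point of $\vect{z}_0$ with $j = \sum_{m<k} n_m + i$ is $(x_{k,i},y_{k,i})$. This bijection between $\{1,\ldots,n_0\}$ and the pairs $(k,i)$ lets me rewrite
\begin{IEEEeqnarray}{rCl}
\mathsf{L}_0(\vect{z}_0,\vect{\theta}) & = & \frac{1}{n_0}\sum_{k=1}^{K}\sum_{i=1}^{n_k} \ell(x_{k,i},y_{k,i},\vect{\theta}).
\end{IEEEeqnarray}
The reordering is legitimate because the sum is finite.

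The second step uses \eqref{EqEmpiricalRiskk} for each $k \in \{1,\ldots,K\}$, which gives
\begin{IEEEeqnarray}{rCl}
\sum_{i=1}^{n_k}\ell(x_{k,i},y_{k,i},\vect{\theta}) & = & n_k\,\mathsf{L}_k(\vect{z}_k,\vect{\theta}).
\end{IEEEeqnarray}
Substituting this into the double sum yields $\frac{1}{n_0}\sum_{k=1}^{K} n_k \mathsf{L}_k(\vect{z}_k,\vect{\theta})$, which is \eqref{EqJan6at09h42in2026inSophia}.

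There is no real obstacle here. The only point requiring care is the bookkeeping that identifies the single-index sum over $\vect{z}_0$ with the double sum over the sub-datasets. This is where one implicitly uses that $n_k \geq 1$ for every $k$, so that each $\mathsf{L}_k$ is well defined, and that $n_0 = \sum_k n_k$.
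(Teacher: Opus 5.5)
Your proof is correct and takes essentially the paper's approach. The paper simply defers to a direct computation, as in Lemma~4 of the cited reference: unfold $\mathsf{L}_0$ over the concatenated dataset $\vect{z}_0$, regroup the finite sum by sub-dataset, and use $\sum_{i=1}^{n_k}\ell(x_{k,i},y_{k,i},\vect{\theta}) = n_k\,\mathsf{L}_k(\vect{z}_k,\vect{\theta})$, with your explicit index bijection filling in bookkeeping the paper leaves implicit.
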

\begin{IEEEproof}
The proof is similar to the proof of \cite[Lemma~4]{perlaza2023validation}.
\end{IEEEproof}
The set of all probability measures on the measurable space $\left(\set{M},\mathscr{F}_{\set{M}}\right)$ is denoted by $\triangle\!\left(\set{M}\right)$.  
Using this notation, the relative entropy or KL-divergence is defined hereunder.
\begin{definition}[Relative Entropy]
Given a probability measure $P$ and a $\sigma$-finite measure $Q$, both on the same measurable space, with $P$ absolutely continuous with respect to $Q$. The relative entropy of $P$ with respect to $Q$ is 
\begin{eqnarray}
\KL{P}{Q} &\triangleq& \int \RND{P}{Q}(\vect{\theta}) \log \left(\RND{P}{Q}(\vect{\theta})\right)  \mathrm{d}Q(\vect{\theta}).
\end{eqnarray}
\end{definition}
\enlargethispage{-\baselineskip}
The set of all probability measures in $\set{M}$ conditioned on an element of ${\set{Z}^{n_k}}$ is denoted by $\simplex{\set{M} \mid \set{Z}^{n_k}}$. 
Moreover, the set of probability measures in $\simplex{\set{M}}$ that are absolutely continuous with respect to $Q_k$ is denoted by $\simplexabs{Q_k}{\set{M}}$.
Using this notation,  a supervised machine learning algorithm is represented by a conditional probability measure, as defined hereunder. %
\begin{definition}[Algorithm]
A conditional probability measure~$P_{\vect{\Theta} \mid \vect{Z}_k}\in \simplex{\set{M} \mid \autoparent{\set{X} \times \set{Y}}^{n_k}}$ is said to represent a supervised machine learning algorithm. The
instance of such an algorithm trained upon the dataset~$\vect{z}_k$ in~\eqref{EqDatasetK} is denoted by~$P_{\vect{\Theta} \mid \vect{Z}_k = \vect{z}_k} \in \simplex{\set{M}}$.
\end{definition}
A class of algorithms that are central in this work are known as Gibbs algorithms. These algorithms are introduced in the following section.

\section{Gibbs Algorithms}
\label{SecGibbsAlgorithms}
Gibbs algorithms are represented by Gibbs conditional probability measures.
Such conditional probability measures are parametrized by the empirical risk function~$\mathsf{L}_k$ in~\eqref{EqEmpiricalRiskk}; a~$\sigma$-finite measure~$Q \in \simplex{\set{M}}$; and a dataset~$\vect{z}_k \in \set{Z}^{n_k}$, with~$k \in \{0, 1, \ldots, K\}$. 
Using this notation, the definition of Gibbs conditional probability measures on the models is presented hereunder.

\begin{definition} 
\label{DefGibbsModels}
Given the function~$\mathsf{L}_k$ in~\eqref{EqEmpiricalRiskk}, with~$k \in \{0, 1, \ldots, K\}$; a~$
\sigma$-finite measure~$Q \in \simplex{\set{M}}$; and 
a~$\lambda_k \in \reals\setminus\lbrace 0 \rbrace$, the probability measure~$P^{(Q, \lambda_k)}_{\vect{\Theta} \mid \vect{Z}_k}\in \triangle\left(\set{M} \mid \set{Z}^{n_k} \right)$ is said to be an~$(\mathsf{L}_k, Q, \lambda_k)$-Gibbs conditional probability measure if
 \begin{IEEEeqnarray}{rCl}
 \label{EqJan16at18h37in2026Sophia}
 \mbox{$\forall \vect{z}_k \in \set{Z}^{n_k}$, }  \int \exp\left( -\frac{1}{\lambda_k} \, \mathsf{L}_k\autoparent{\vect{z}_k , \vect{\theta}}\right)\mathrm{d}Q\left( \vect{\theta}\right) < + \infty; 
 \end{IEEEeqnarray}
and for all~$\left(\vect{z}_k, \vect{\theta} \right) \in \set{Z}^{n_k} \times \supp Q$,
\begin{IEEEeqnarray}{rcl}
\label{EqGibbsRNDFixedDataset}
\frac{\mathrm{d}P_{\vect{\Theta}\mid\vect{Z}_k = \vect{z}_k}^{(Q, \lambda_k)}}{\mathrm{d}Q } \autoparent{\vect{\theta}} & = & \frac{ \exp\autoparent{- \frac{1}{\lambda_k} \EmpRisk{k}{\vect{z}_k}{\vect{\theta}}}}{\int \exp\left( -\frac{1}{\lambda_k} \, \mathsf{L}_k\autoparent{\vect{z}_k , \vect{\nu}}\right)\mathrm{d}Q\left( \vect{\nu}\right)}.
\end{IEEEeqnarray}
\end{definition}
Note that if $\lambda_k<0$, the condition in~\eqref{EqJan16at18h37in2026Sophia} may fail. Therefore, throughout this work, the parameter $\lambda_k$ is chosen so that~\eqref{EqJan16at18h37in2026Sophia} holds.
Note that, while~$P_{\vect{\Theta}\mid \vect{Z}_k}^{(Q, \lambda_k)}$ in~\eqref{EqGibbsRNDFixedDataset} is referred to as a Gibbs conditional probability measure, the measure~$P_{\vect{\Theta}\mid\vect{Z}_k = \vect{z}_k}^{(Q, \lambda_k)}$, obtained by conditioning upon a given dataset~$\vect{z}_k\in\set{Z}^{n_k}$, is referred to as a Gibbs probability measure.
Consider the functional~$\mathsf{R}_{\vect{z}_k}$ defined as follows, 
\begin{IEEEeqnarray}{rCl}
\label{EqExpEmpRiskModels}
\mathsf{R}_{\vect{z}_k} : \function{ \simplex{\set{M}}}{[0, +\infty)}{P}{\int \mathsf{L}_k\autoparent{\vect{z}_k,\vect{\theta}} \mathrm{d}P\autoparent{\vect{\theta}},}
\end{IEEEeqnarray}
where the function~$\mathsf{L}_k$ is defined in~\eqref{EqEmpiricalRiskk}.
The Gibbs probability measure~$P_{\vect{\Theta}\mid\vect{Z}_k = \vect{z}_k}^{(Q, \lambda_k)}$ in~\eqref{EqGibbsRNDFixedDataset} is related to the following optimization problem, under the assumption that~$\lambda_k > 0$:
\begin{IEEEeqnarray}{rCl}
\label{EqJul25at9h55in2025NiceA}
\min_{P \in \triangle_{Q} \autoparent{\set{M}}} & & \mathsf{R}_{\vect{z}_k} \left( P \right) + \lambda_k \KL{P}{Q}.
\end{IEEEeqnarray}
Alternatively, when~$\lambda_k < 0$, such a Gibbs probability measure is related to the following optimization problem:
\begin{IEEEeqnarray}{rCl}
\label{EqJul25at9h56in2025NiceB}
\max_{P \in \triangle_{Q} \autoparent{\set{M}}} & & \mathsf{R}_{\vect{z}_k} \left( P \right) + \lambda_k \KL{P}{Q},
\end{IEEEeqnarray}
where the functional~$\mathsf{R}_{\vect{z}_k}$ is defined in~\eqref{EqExpEmpRiskModels}.
The importance of these optimization problems stems from the fact that the unlearning problem can be posed as the optimization problem in~\eqref{EqJul25at9h56in2025NiceB}.
The following lemma shows the connections between the Gibbs probability measure~$P_{\vect{\Theta}\mid \vect{Z}_k = \vect{z}_k}^{(Q, \lambda_k)}$ in~\eqref{EqGibbsRNDFixedDataset} and the optimization problems in~\eqref{EqJul25at9h55in2025NiceA} and~\eqref{EqJul25at9h56in2025NiceB}.
\enlargethispage{-0.18in}
\begin{lemma}
\label{LemmaJan8at15h57in2026inSophia}
The~$(\mathsf{L}_k, Q,\lambda_k)$-Gibbs probability measure~$P_{\vect{\Theta}\mid \vect{Z}_k = \vect{z}_k}^{(Q, \lambda_k)}$ in~\eqref{EqGibbsRNDFixedDataset} is the unique solution to~\eqref{EqJul25at9h55in2025NiceA} \vspace{1ex}(respectively, to~\eqref{EqJul25at9h56in2025NiceB}), if~$\lambda_k > 0$ (respectively, if~$\lambda_k < 0$).
\end{lemma}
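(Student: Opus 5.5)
The plan is to use the standard Donsker--Varadhan (Gibbs variational) identity. We rewrite the objective as a constant plus $\lambda_k$ times a relative entropy with respect to the Gibbs measure itself. Throughout, fix $\vect{z}_k \in \set{Z}^{n_k}$ and write $P^{\star} \triangleq P_{\vect{\Theta}\mid \vect{Z}_k = \vect{z}_k}^{(Q, \lambda_k)}$. Also write
\begin{IEEEeqnarray}{rCl}
Z & \triangleq & \int \exp\left(-\tfrac{1}{\lambda_k}\mathsf{L}_k\autoparent{\vect{z}_k,\vect{\nu}}\right)\mathrm{d}Q\autoparent{\vect{\nu}},
\end{IEEEeqnarray}
which is positive and finite by \eqref{EqJan16at18h37in2026Sophia}. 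Since the Radon--Nikodym derivative in \eqref{EqGibbsRNDFixedDataset} is strictly positive on $\supp Q$, $Q$ and $P^{\star}$ are mutually absolutely continuous. Hence $\triangle_{Q}\autoparent{\set{M}}$ coincides with the set of probability measures absolutely continuous with respect to $P^{\star}$. For any such $P$, the chain rule gives $\RND{P}{Q} = \RND{P}{P^{\star}} \, \RND{P^{\star}}{Q}$ almost everywhere.

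The key step is then the identity
\begin{IEEEeqnarray}{rCl}
\KL{P}{Q} & = & \KL{P}{P^{\star}} + \int \log \RND{P^{\star}}{Q}\autoparent{\vect{\theta}} \, \mathrm{d}P\autoparent{\vect{\theta}}.
\end{IEEEeqnarray}
Using \eqref{EqGibbsRNDFixedDataset}, the last integral equals $-\frac{1}{\lambda_k}\mathsf{R}_{\vect{z}_k}(P) - \log Z$. Multiplying by $\lambda_k$ and adding $\mathsf{R}_{\vect{z}_k}(P)$ makes the risk terms cancel, which yields
\begin{IEEEeqnarray}{rCl}
\mathsf{R}_{\vect{z}_k}(P) + \lambda_k \KL{P}{Q} & = & -\lambda_k \log Z + \lambda_k \KL{P}{P^{\star}}.
\end{IEEEeqnarray}
If $\lambda_k>0$, the right-hand side is minimized exactly when $\KL{P}{P^{\star}}$ is minimized. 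If $\lambda_k<0$, it is maximized exactly then. In both cases Gibbs' inequality gives $\KL{P}{P^{\star}} \geq 0$, with equality if and only if $P = P^{\star}$. This gives both optimality and uniqueness, and the optimal value is $-\lambda_k \log Z$.

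The main obstacle is making the decomposition rigorous when some terms may be infinite. This concerns $\mathsf{R}_{\vect{z}_k}(P)$, which lies in $[0,+\infty]$, and $\KL{P}{Q}$, which, with $Q$ only $\sigma$-finite, may be $+\infty$ or have an ill-defined negative part. I will handle it as follows. First restrict attention to those $P$ for which the objective is well defined. Then split $\int \RND{P}{Q}\log\RND{P}{Q}\,\mathrm{d}Q$ using the pointwise identity $\log \RND{P}{Q} = \log \RND{P}{P^{\star}} - \frac{1}{\lambda_k}\mathsf{L}_k - \log Z$. The nonnegativity of $\mathsf{L}_k$ controls the sign of the risk term: for $\lambda_k>0$ it is bounded above by $-\log Z$, and for $\lambda_k<0$ we work in the extended reals. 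Finally, check that whenever one side is infinite the other is as well, so the identity holds in $\reals\cup\{\pm\infty\}$. With that in place, the remaining steps are routine.
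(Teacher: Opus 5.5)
Your proof is correct. The paper gives no argument of its own here; it simply invokes \cite[Lemma~1]{PerlazaEntropy2025}. So the comparison is between a citation and your self-contained derivation, which rewrites the objective via $\mathsf{R}_{\vect{z}_k}(P)+\lambda_k\KL{P}{Q}=\lambda_k\KL{P}{P^{\star}}-\lambda_k\log Z$.

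Your route buys transparency. A single identity handles both signs of $\lambda_k$, since the sign alone decides whether $P^{\star}$ is the minimizer or the maximizer. Uniqueness follows directly from the equality case of Gibbs' inequality, with no separate strict-convexity argument. The optimal value $-\lambda_k\log Z$ comes out explicitly. The citation is shorter but silently inherits whatever integrability conventions the reference adopts. The paper also sidesteps the issue by declaring $\mathsf{R}_{\vect{z}_k}$ to be $[0,+\infty)$-valued. That is exactly where your sketch needs one more sentence.

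After ``restrict attention to those $P$ for which the objective is well defined'', you must also check that $P^{\star}$ itself lies in that set. For $\lambda_k<0$ this does not follow from \eqref{EqJan16at18h37in2026Sophia} alone. Take $\set{M}=[0,\infty)$, $n_k=1$, $\mathsf{L}_k(\vect{z}_k,\theta)=\theta$, $Q$ with Lebesgue density $Ce^{-\theta}(1+\theta)^{-2}$, and $\lambda_k=-1$. Then $Z=C<\infty$ and $P^{\star}$ has density $(1+\theta)^{-2}$, but $\mathsf{R}_{\vect{z}_k}(P^{\star})=\KL{P^{\star}}{Q}=+\infty$. So the objective at $P^{\star}$ is $\infty-\infty$, and under your restriction the supremum $-\lambda_k\log Z$ is approached but never attained. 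For $\lambda_k>0$ and $Q$ a probability measure there is no issue, since $\mathsf{R}_{\vect{z}_k}(P^{\star})\leq \lambda_k/(eZ)$. For an infinite $\sigma$-finite $Q$, however, the same pathology can occur with $\KL{P^{\star}}{Q}=-\infty$.

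The cleanest repair is to read the objective as the single integral $\int\bigl(\mathsf{L}_k(\vect{z}_k,\vect{\theta})+\lambda_k\log\frac{\mathrm{d}P}{\mathrm{d}Q}(\vect{\theta})\bigr)\mathrm{d}P(\vect{\theta})$. This coincides with $\mathsf{R}_{\vect{z}_k}(P)+\lambda_k\KL{P}{Q}$ whenever the latter is defined. Your pointwise identity $\mathsf{L}_k+\lambda_k\log\frac{\mathrm{d}P}{\mathrm{d}Q}=\lambda_k\log\frac{\mathrm{d}P}{\mathrm{d}P^{\star}}-\lambda_k\log Z$ ($P$-a.e.) then gives the value $\lambda_k\KL{P}{P^{\star}}-\lambda_k\log Z$ for every $P\in\triangle_{Q}(\set{M})$, with no case analysis. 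This works because the negative part of $\log\frac{\mathrm{d}P}{\mathrm{d}P^{\star}}$ is always $P$-integrable, with integral at most $1/e$. Alternatively, state $\mathsf{R}_{\vect{z}_k}(P^{\star})<\infty$ as a standing assumption.
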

\begin{IEEEproof}
The proof is immediate from \cite[Lemma~1]{PerlazaEntropy2025}.
\end{IEEEproof}
Two optimization problems that are also closely related to the probability measure~$P_{\vect{\Theta}\mid\vect{Z}_k = \vect{z}_k}^{(Q, \lambda_k)}$ in~\eqref{EqGibbsRNDFixedDataset} are the following. When~$\lambda_k > 0$, the optimization problem is:
\begin{subequations}
\vspace{-2ex}
\label{EqJanuary1at13h54in2026atNice}
\begin{IEEEeqnarray}{rCl}
\min_{P \in \triangle_{Q} \autoparent{\set{M}}} & & \mathsf{R}_{\vect{z}_k} \left( P \right) \\
\text{s.t.} & \quad &  \KL{P}{Q} \leqslant \gamma_k,
\end{IEEEeqnarray} 
\end{subequations}
for some~$\gamma_k>0$.
Alternatively, when~$\lambda_k < 0$, the optimization problem is:
\begin{subequations}
\vspace{-3ex}
\label{EqOct29at18h32in2025inAntibesB}
\begin{IEEEeqnarray}{rCl}
\max_{P \in \triangle_{Q} \autoparent{\set{M}}} & & \mathsf{R}_{\vect{z}_k} \left( P \right) \\
 && \mbox{s.t.}~\KL{P}{Q} \leq \gamma_k.
\end{IEEEeqnarray}
\end{subequations}
The following lemma formalizes these observations.
\begin{lemma}\label{LemmaJanuary1at14h24in2026}
Consider the~$(\mathsf{L}_k, Q,\lambda_k)$-Gibbs probability measure~$P_{\vect{\Theta}\mid \vect{Z}_k = \vect{z}_k}^{(Q, \lambda_k)}$ in~\eqref{EqGibbsRNDFixedDataset}  and assume that~$\lambda_k > 0$ (respectively,~$\lambda_k < 0$) is such that
\begin{IEEEeqnarray}{rCl}
 \KL{P_{\vect{\Theta}\mid\vect{Z}_k = \vect{z}_k}^{(Q, \lambda_k)}}{Q} & = & \gamma_k,
\end{IEEEeqnarray}
with $\gamma_k$ in \eqref{EqJanuary1at13h54in2026atNice} (respectively, in \eqref{EqOct29at18h32in2025inAntibesB}).
Then, the measure~$P_{\vect{\Theta}\mid \vect{Z}_k = \vect{z}_k}^{(Q, \lambda_k)}$  is the unique solution to~\eqref{EqJanuary1at13h54in2026atNice} (respectively, to~\eqref{EqOct29at18h32in2025inAntibesB}).
\end{lemma}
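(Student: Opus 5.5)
The plan is a Lagrangian-type argument that reduces the constrained problems to the penalized ones handled by Lemma~\ref{LemmaJan8at15h57in2026inSophia}. I treat the case $\lambda_k > 0$ and problem~\eqref{EqJanuary1at13h54in2026atNice} first; the case $\lambda_k<0$ and problem~\eqref{EqOct29at18h32in2025inAntibesB} is symmetric, with min replaced by max and inequalities reversed.

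\textbf{Step 1 (feasibility).} By hypothesis, $\KL{P_{\vect{\Theta}\mid\vect{Z}_k = \vect{z}_k}^{(Q, \lambda_k)}}{Q} = \gamma_k$. The Gibbs measure is mutually absolutely continuous with $Q$ on $\supp Q$, because its Radon--Nikodym derivative in~\eqref{EqGibbsRNDFixedDataset} is strictly positive and finite. Hence it belongs to $\triangle_Q(\set{M})$ and is feasible.

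\textbf{Step 2 (optimality).} Take any feasible $P \in \triangle_Q(\set{M})$, so that $\KL{P}{Q} \leq \gamma_k$. Lemma~\ref{LemmaJan8at15h57in2026inSophia} gives
\begin{IEEEeqnarray}{rCl}
\mathsf{R}_{\vect{z}_k}\bigl(P^{(Q,\lambda_k)}_{\vect{\Theta}\mid\vect{Z}_k=\vect{z}_k}\bigr) + \lambda_k \gamma_k & \leq & \mathsf{R}_{\vect{z}_k}(P) + \lambda_k \KL{P}{Q} \nonumber\\
& \leq & \mathsf{R}_{\vect{z}_k}(P) + \lambda_k\gamma_k, \nonumber
\end{IEEEeqnarray}
where the last step uses $\lambda_k>0$. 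Cancelling $\lambda_k\gamma_k$ shows that the Gibbs measure attains the minimum.

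\textbf{Step 3 (uniqueness).} Suppose $P$ is another minimizer. Then equality holds throughout the chain in Step~2, so $P$ also minimizes~\eqref{EqJul25at9h55in2025NiceA}. The uniqueness part of Lemma~\ref{LemmaJan8at15h57in2026inSophia} then forces $P$ to equal the Gibbs measure.

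\textbf{The case $\lambda_k<0$.} Lemma~\ref{LemmaJan8at15h57in2026inSophia} now gives $\mathsf{R}_{\vect{z}_k}(P^{(Q,\lambda_k)}) + \lambda_k\gamma_k \geq \mathsf{R}_{\vect{z}_k}(P) + \lambda_k\KL{P}{Q}$. Since $\lambda_k<0$ and $\KL{P}{Q}\leq\gamma_k$, we have $\lambda_k \KL{P}{Q} \geq \lambda_k\gamma_k$. So the right-hand side is at least $\mathsf{R}_{\vect{z}_k}(P)+\lambda_k\gamma_k$, which gives maximality. Uniqueness follows exactly as in Step~3.

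\textbf{Main obstacle.} The delicate point is finiteness. The cancellation in Step~2 requires $\mathsf{R}_{\vect{z}_k}$ of the Gibbs measure to be finite, and terms such as $\mathsf{R}_{\vect{z}_k}(P)$ must not be infinite in a way that makes the comparison vacuous. I would handle this by noting that $\KL{P}{Q}\leq \gamma_k<\infty$, and then using the Gibbs (Donsker--Varadhan) identity
\[
\mathsf{R}_{\vect{z}_k}(P)+\lambda_k\KL{P}{Q} = \lambda_k \KL{P}{P^{(Q,\lambda_k)}} - \lambda_k\log Z,
\]
where $Z$ is the normalizing integral in~\eqref{EqJan16at18h37in2026Sophia}, which is finite. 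Applied to the Gibbs measure itself, this identity shows its objective equals $-\lambda_k\log Z$, which is finite. Since $\lambda_k\gamma_k$ is also finite, $\mathsf{R}_{\vect{z}_k}$ of the Gibbs measure is finite, so the cancellation is legitimate.
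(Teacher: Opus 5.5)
Your proof is correct, but it takes a different route from the paper. The paper gives no argument of its own and simply invokes \cite[Lemma~4]{PerlazaEntropy2025}. You instead derive the constrained characterization from the penalized one already stated in Lemma~\ref{LemmaJan8at15h57in2026inSophia}, via a Lagrangian-sufficiency (sandwich) argument. Writing $P^{\star}$ for the Gibbs measure, the hypothesis $\KL{P^{\star}}{Q}=\gamma_k$ plays the role of complementary slackness. The sign of $\lambda_k$ fixes the direction in which $\KL{P}{Q}\leq\gamma_k$ becomes a bound on $\lambda_k\KL{P}{Q}$. Your version is self-contained relative to the paper's own lemmas and shows transparently why the multiplier must be exactly $\lambda_k$. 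The citation buys brevity and delegates the measure-theoretic bookkeeping to the reference.

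Two minor points. First, subtracting the finite number $\lambda_k\gamma_k$ from both sides of an inequality in $[0,+\infty]$ is always legitimate. So finiteness of the optimal value is really needed in Step~3, to force equality along the chain and invoke uniqueness, not in Step~2. Second, the identity in your last paragraph proves the whole lemma without Lemma~\ref{LemmaJan8at15h57in2026inSophia}. Subtracting its instance at $P^{\star}$ gives, for every feasible $P$ with $\mathsf{R}_{\vect{z}_k}(P)<+\infty$,
$\mathsf{R}_{\vect{z}_k}(P)-\mathsf{R}_{\vect{z}_k}(P^{\star})=\lambda_k\KL{P}{P^{\star}}+\lambda_k\bigl(\gamma_k-\KL{P}{Q}\bigr)$.
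The case $\mathsf{R}_{\vect{z}_k}(P)=+\infty$ is trivial for $\lambda_k>0$, and is excluded for $\lambda_k<0$ by the Donsker--Varadhan inequality. The right-hand side is $\geq\lambda_k\KL{P}{P^{\star}}\geq 0$ when $\lambda_k>0$, and $\leq\lambda_k\KL{P}{P^{\star}}\leq 0$ when $\lambda_k<0$. Hence $\mathsf{R}_{\vect{z}_k}(P)=\mathsf{R}_{\vect{z}_k}(P^{\star})$ forces $\KL{P}{P^{\star}}=0$, that is, $P=P^{\star}$. This gives optimality and uniqueness in one line.
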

\begin{IEEEproof}
The proof follows from \cite[Lemma~4]{PerlazaEntropy2025}. 
\end{IEEEproof}
Lemma~\ref{LemmaJanuary1at14h24in2026} implies that the probability measure~$P_{\vect{\Theta}\mid\vect{Z}_k = \vect{z}_k}^{(Q, \lambda_k)}$ in~\eqref{EqGibbsRNDFixedDataset} is the one that minimizes (respectively, maximizes) the training empirical risk over all probability measures in the following neighborhood of~$Q$,
\begin{IEEEeqnarray}{rCl}
\label{EqJanuary1at19h39in2026HomeNice}
\left\lbrace P \in \simplexabs{Q}{\set{M}} :  \KL{P}{Q} \leqslant  \KL{P_{\vect{\Theta}\mid\vect{Z}_k = \vect{z}_k}^{(Q, \lambda_k)}}{Q} \right\rbrace.
\end{IEEEeqnarray}
The relevance of Gibbs algorithms, in particular when~$\lambda_k > 0$, stems from the observation that the probability measure~$P_{\vect{\Theta}\mid \vect{Z}_k = \vect{z}_k}^{(Q, \lambda_k)}$ in~\eqref{EqGibbsRNDFixedDataset} is the long-run distribution of a stochastic gradient descent algorithm, as highlighted in~\cite{Azizian2024What}. 
\section{Main result} 
\label{SecMainResult}
The main result of this work is presented using the following~$(\mathsf{L}_0, Q,\lambda_0)$-Gibbs probability measure 
\begin{subequations}\label{EqJanuary12at11h30in2026Sophia}
\begin{IEEEeqnarray}{rcl}
P^{\autoparent{Q, \lambda_0}}_{\vect{\Theta} \mid \vect{Z}_0 = \vect{z}_0}  \in \simplex{\set{M}},
\end{IEEEeqnarray}
with~$\lambda_0 > 0$, which represents a Gibbs algorithm trained upon the aggregated dataset~$\vect{z}_0$ in~\eqref{EqDatasetZero}. That is, for all~$\vect{\theta}\in\supp Q$, 
\begin{IEEEeqnarray}{rCl}
\label{EqJanuary14at09h53in2026Nice}
\RND{P^{\autoparent{Q,  \lambda_0}}_{\vect{\Theta} \mid \vect{Z}_0 = \vect{z}_0}}{Q}\autoparent{\vect{\theta}} &=& \frac{ \exp\autoparent{ -\frac{1}{\lambda_0} \EmpRisk{0}{\vect{z}_0}{\vect{\theta}}}} { \int \exp\autoparent{ -\frac{1}{\lambda_0}\EmpRisk{0}{\vect{z}_0}{\vect{\nu}}}\mathrm{d} Q \autoparent{\vect{\nu}}}.
\end{IEEEeqnarray}
\end{subequations}
From Lemma~\ref{LemmaJanuary1at14h24in2026}, it follows that such an algorithm is optimal in the sense that the corresponding probability measure~$P^{\autoparent{Q, \lambda_0}}_{\vect{\Theta} \mid \vect{Z}_0 = \vect{z}_0}$ in~\eqref{EqJanuary12at11h30in2026Sophia} minimizes the expected training empirical risk with respect to~$\vect{z}_0$,  over all probability measures in the following neighborhood of~$Q$,
\begin{IEEEeqnarray}{rCl}
\label{EqJanuary13at16h25in2026BusToNice}
\mathcal{G}_0& \triangleq &\left\lbrace P \in \triangle_{Q}(\set{M}) :  \KL{P}{Q} \leqslant \KL{P^{\autoparent{Q, \lambda_0}}_{\vect{\Theta} \mid \vect{Z}_0 = \vect{z}_0}}{Q}\right\rbrace. \spnum
\end{IEEEeqnarray}
\enlargethispage{-0.10in}

Consider that  the dataset~$\vect{z}_0$ in~\eqref{EqDatasetZero} is partitioned into two smaller datasets~$\vect{z}_1 \in \set{Z}^{n_1}$ and~$\vect{z}_2 \in \set{Z}^{n_2}$, \ie~$K=2$.  
The objective is to transform the~$(\mathsf{L}_0, Q,\lambda_0)$-Gibbs  probability measure~$P^{\autoparent{Q, \lambda_0}}_{\vect{\Theta} \mid \vect{Z}_0 = \vect{z}_0}$ in~\eqref{EqJanuary12at11h30in2026Sophia} into  an~$(\mathsf{L}_2, Q,\lambda_2)$-Gibbs probability measure 
\begin{subequations}
\label{EqJanuary12at11h45in2026Sophia}
\begin{IEEEeqnarray}{rcl}
P^{\autoparent{Q, \lambda_2}}_{\vect{\Theta} \mid \vect{Z}_2 = \vect{z}_2}  \in \simplex{\set{M}},
\end{IEEEeqnarray}
for some~$\lambda_2 > 0$, which represents a Gibbs algorithm exclusively trained upon dataset~$\vect{z}_2$.  That is, for all~$\vect{\theta}\in\supp Q$, 
\begin{IEEEeqnarray}{rCl}
\label{EqDec30at15h43in2025Antibes}
\RND{P^{\autoparent{Q,  \lambda_2}}_{\vect{\Theta} \mid \vect{Z}_2 = \vect{z}_2}}{Q}\autoparent{\vect{\theta}} &=& \frac{ \exp\autoparent{ -\frac{1}{\lambda_2} \EmpRisk{2}{\vect{z}_2}{\vect{\theta}}}} { \int \exp\autoparent{ -\frac{1}{\lambda_2}\EmpRisk{2}{\vect{z}_2}{\vect{\nu}}}\mathrm{d} Q \autoparent{\vect{\nu}}}.
\end{IEEEeqnarray}
\end{subequations}
From Lemma~\ref{LemmaJanuary1at14h24in2026}, it follows that the Gibbs algorithm represented by the measure~$P^{\autoparent{Q, \lambda_2}}_{\vect{\Theta} \mid \vect{Z}_2 = \vect{z}_2}$ in~\eqref{EqJanuary12at11h45in2026Sophia} minimizes the expected empirical risk with respect to~$\vect{z}_2$  over all probability measures in the following neighborhood of~$Q$,
\begin{IEEEeqnarray}{rCl}
\label{EqJanuary143at56h26in2026Nice}
\mathcal{G}_1& \triangleq &
\left\lbrace P \in \simplexabs{Q}{\set{M}} :  \KL{P}{Q} \leqslant \KL{P^{\autoparent{Q, \lambda_2}}_{\vect{\Theta} \mid \vect{Z}_2 = \vect{z}_2} }{Q}\right\rbrace.\spnum
\end{IEEEeqnarray}

In a nutshell, the objective is to unlearn the dataset~$\vect{z}_1$ from the Gibbs algorithm represented by the probability measure~$P^{\autoparent{Q, \lambda_0}}_{\vect{\Theta} \mid \vect{Z}_0 = \vect{z}_0}$ in~\eqref{EqJanuary12at11h30in2026Sophia}, while retaining the dataset~$\vect{z}_2$. This operation is often referred to as exact unlearning and is defined for this particular case as follows:
\begin{definition}[Exact unlearning]\label{DefJanuary16at19h47in2026Sophia}
Consider the algorithm represented by the $(\mathsf{L}_0, Q,\lambda_0)$-Gibbs probability measure $P^{\autoparent{Q, \lambda_0}}_{\vect{\Theta} \mid \vect{Z}_0 = \vect{z}_0}$ in \eqref{EqJanuary12at11h30in2026Sophia} and the datasets $\vect{z}_1$ and $\vect{z}_2$ to be unlearned and retained, respectively, with $\vect{z}_0$, $\vect{z}_1$ and $\vect{z}_2$ in \eqref{EqDatasetZero}. 
An algorithm $P \in \simplexabs{Q}{\set{M}}$ is said to achieve exact unlearning of $\vect{z}_1$ from  $P^{\autoparent{Q, \lambda_0}}_{\vect{\Theta} \mid \vect{Z}_0 = \vect{z}_0}$, if for all $\vect{\theta}\in\supp Q$,
\begin{IEEEeqnarray}{rCl}
\label{EqUnlearningDef}
\frac{\mathrm{d}P}{\mathrm{d}P^{\autoparent{Q,\lambda_2}}_{\vect{\Theta}\mid \vect{Z}_2=\vect{z}_2}}\autoparent{\vect{\theta}} = 1,
\end{IEEEeqnarray}
where the probability measure $P^{\autoparent{Q,\lambda_2}}_{\vect{\Theta}\mid \vect{Z}_2=\vect{z}_2}$ is defined in \eqref{EqJanuary12at11h45in2026Sophia}, for some $\lambda_2 > 0$.
\end{definition}
The equality in \eqref{EqUnlearningDef} implies that the measures $P$ and $P^{\autoparent{Q,\lambda_2}}_{\vect{\Theta}\mid \vect{Z}_2=\vect{z}_2}$ are identical.   See \cite[Theorem~3]{InriaRR9591}.

The main result of this work consists in showing that exact unlearning of $\vect{z}_1$ from  $P^{\autoparent{Q, \lambda_0}}_{\vect{\Theta} \mid \vect{Z}_0 = \vect{z}_0}$ in \eqref{EqJanuary12at11h30in2026Sophia} is achieved by 
solving the optimization problem
\begin{IEEEeqnarray}{rCl}
\label{EqJan6at15h24in2026Sophia}
\max_{P \in \simplexabs{Q}{\set{M}}}&&
\mathsf{R}_{\vect{z}_1} \autoparent{P}
+ \lambda_1 \KL{P}{P^{\autoparent{Q,\lambda_0}}_{\vect{\Theta}\mid \vect{Z}_0=\vect{z}_0}},
\end{IEEEeqnarray}
for some~$\lambda_1<0$, where the functional~$\mathsf{R}_{\vect{z}_1}$ is defined in~\eqref{EqExpEmpRiskModels}.
From Lemma~\ref{LemmaJan8at15h57in2026inSophia}, it follows that the unique solution to~\eqref{EqJan6at15h24in2026Sophia} is an~$(\mathsf{L}_1, P^{\autoparent{Q,\lambda_0}}_{\vect{\Theta}\mid \vect{Z}_0=\vect{z}_0},\lambda_1)$-Gibbs  probability measure
\begin{IEEEeqnarray}{rCl}
\label{EqJanuary14at9h22in2026HomeNice}
P^{\autoparent{P^{\autoparent{Q, \lambda_0}}_{\vect{\Theta} \mid \vect{Z}_0 = \vect{z}_0},\lambda_1}}_{\vect{\Theta} \mid \vect{Z}_1 = \vect{z}_1} & \in & \simplex{\set{M}}.
\end{IEEEeqnarray}

The following theorem, which is the main result of this work, shows that a specific choice of $\lambda_1$ and $\lambda_2$ makes the~$(\mathsf{L}_1, P^{\autoparent{Q,\lambda_0}}_{\vect{\Theta}\mid \vect{Z}_0=\vect{z}_0},\lambda_1)$-Gibbs probability  measure in~\eqref{EqJanuary14at9h22in2026HomeNice}  identical to the probability measure~$P^{\autoparent{Q, \lambda_2}}_{\vect{\Theta} \mid \vect{Z}_2 = \vect{z}_2}$ in~\eqref{EqJanuary12at11h45in2026Sophia}, which implies exact unlearning (Definition~\ref{DefJanuary16at19h47in2026Sophia}).
\begin{theorem}
\label{TheoremDec30at21h00in2025inAntibes}
Assume that
\begin{IEEEeqnarray}{rcl}
\label{EqJanuary14at10h32in2026inNice}
\lambda_1 =-\frac{n_0}{n_1} \lambda_0 & \text{ and }& \lambda_2  = \frac{n_0}{n_2} \lambda_0,
\end{IEEEeqnarray}
for some~$\lambda_0 > 0$.
Then, the~$(\mathsf{L}_2, Q,\lambda_2)$-Gibbs probability measure~$P^{\autoparent{Q, \lambda_2}}_{\vect{\Theta} \mid \vect{Z}_2 = \vect{z}_2}$ in~\eqref{EqJanuary12at11h45in2026Sophia}  and the~$(\mathsf{L}_1, P^{\autoparent{Q,\lambda_0}}_{\vect{\Theta}\mid \vect{Z}_0=\vect{z}_0},\lambda_1)$-Gibbs  probability measure~$P^{\autoparent{P^{\autoparent{Q, \lambda_0}}_{\vect{\Theta} \mid \vect{Z}_0 = \vect{z}_0},\lambda_1}}_{\vect{\Theta} \mid \vect{Z}_1 = \vect{z}_1}$ in~\eqref{EqJanuary14at9h22in2026HomeNice}  satisfy  for all~$\vect{\theta}\in\supp Q$,  
\begin{IEEEeqnarray}{rCl}
\label{EqJanuary1at16h30in2026atHomeNice}
\RND{P^{\autoparent{P^{\autoparent{Q, \lambda_0}}_{\vect{\Theta} \mid \vect{Z}_0 = \vect{z}_0},\lambda_1}}_{\vect{\Theta} \mid \vect{Z}_1 = \vect{z}_1}}{P^{\autoparent{Q,  \lambda_2}}_{\vect{\Theta} \mid \vect{Z}_2 = \vect{z}_2}}\autoparent{\vect{\theta}} &=& 1.
\end{IEEEeqnarray}
\end{theorem}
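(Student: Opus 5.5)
The plan is to compute the Radon--Nikodym derivative of the unlearned measure with respect to $Q$ directly, show that it equals the right-hand side of \eqref{EqDec30at15h43in2025Antibes}, and then conclude \eqref{EqJanuary1at16h30in2026atHomeNice} from the chain rule for Radon--Nikodym derivatives.

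\textbf{Step 1: the derivative of the unlearned measure with respect to $Q$.} Write $P_0 \triangleq P^{(Q,\lambda_0)}_{\vect{\Theta}\mid\vect{Z}_0=\vect{z}_0}$ and $P_1 \triangleq P^{(P_0,\lambda_1)}_{\vect{\Theta}\mid\vect{Z}_1=\vect{z}_1}$. By Definition~\ref{DefGibbsModels}, applied with reference measure $P_0$, we have for all $\vect{\theta}\in\supp P_0$
\begin{IEEEeqnarray*}{rCl}
\RND{P_1}{P_0}(\vect{\theta}) = \frac{\exp\left(-\frac{1}{\lambda_1}\mathsf{L}_1(\vect{z}_1,\vect{\theta})\right)}{\int \exp\left(-\frac{1}{\lambda_1}\mathsf{L}_1(\vect{z}_1,\vect{\nu})\right)\mathrm{d}P_0(\vect{\nu})}.
\end{IEEEeqnarray*}
Note that $P_0$ and $Q$ are mutually absolutely continuous, since the density in \eqref{EqJanuary14at09h53in2026Nice} is strictly positive and finite. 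Hence $\supp P_0=\supp Q$ and $P_1 \ll Q$. The chain rule then gives $\RND{P_1}{Q}=\RND{P_1}{P_0}\,\RND{P_0}{Q}$.

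\textbf{Step 2: combining the exponents.} In the numerator, the exponent becomes
\[
-\tfrac{1}{\lambda_1}\mathsf{L}_1-\tfrac{1}{\lambda_0}\mathsf{L}_0 .
\]
Lemma~\ref{LemmaJan9at11h49in2026Sophia} with $K=2$ gives $\mathsf{L}_0=\frac{n_1}{n_0}\mathsf{L}_1+\frac{n_2}{n_0}\mathsf{L}_2$. With $\lambda_1=-\frac{n_0}{n_1}\lambda_0$ we get $-\frac{1}{\lambda_1}=\frac{n_1}{n_0\lambda_0}$, so the $\mathsf{L}_1$ terms cancel. What remains is
\[
-\tfrac{n_2}{n_0\lambda_0}\mathsf{L}_2(\vect{z}_2,\vect{\theta}) = -\tfrac{1}{\lambda_2}\mathsf{L}_2(\vect{z}_2,\vect{\theta}).
\]
For the denominator, the normalizing integral with respect to $P_0$ equals, after substituting $\mathrm{d}P_0=\RND{P_0}{Q}\,\mathrm{d}Q$, the same cancellation divided by $\int\exp(-\frac{1}{\lambda_0}\mathsf{L}_0)\,\mathrm{d}Q$. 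That normalizing constant of $P_0$ cancels with the one in the numerator. Therefore
\[
\RND{P_1}{Q}(\vect{\theta})=\frac{\exp\left(-\frac{1}{\lambda_2}\mathsf{L}_2(\vect{z}_2,\vect{\theta})\right)}{\int\exp\left(-\frac{1}{\lambda_2}\mathsf{L}_2(\vect{z}_2,\vect{\nu})\right)\mathrm{d}Q(\vect{\nu})},
\]
which is exactly \eqref{EqDec30at15h43in2025Antibes}.

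\textbf{Step 3: well-definedness and conclusion.} The main point to check is that the integrability condition \eqref{EqJan16at18h37in2026Sophia} holds for $P_1$, since $\lambda_1<0$ and the condition could a priori fail. I will argue it holds automatically here: the integral $\int\exp(-\frac{1}{\lambda_1}\mathsf{L}_1)\,\mathrm{d}P_0$ equals a ratio of $Q$-integrals, namely $\int\exp(-\frac{1}{\lambda_2}\mathsf{L}_2)\,\mathrm{d}Q$ over $\int\exp(-\frac{1}{\lambda_0}\mathsf{L}_0)\,\mathrm{d}Q$. The numerator is at most $Q(\set{M})<\infty$ because $\mathsf{L}_2\ge 0$ and $\lambda_2>0$; the denominator is positive and finite because $P_0$ is well defined. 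So $P_1$ exists, and by Lemma~\ref{LemmaJan8at15h57in2026inSophia} it is the unique solution to \eqref{EqJan6at15h24in2026Sophia}.

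Finally, both $P_1$ and $P^{(Q,\lambda_2)}_{\vect{\Theta}\mid\vect{Z}_2=\vect{z}_2}$ have the same strictly positive density with respect to $Q$. Applying the chain rule once more,
\[
\RND{P_1}{P^{(Q,\lambda_2)}_{\vect{\Theta}\mid\vect{Z}_2=\vect{z}_2}} = \RND{P_1}{Q}\Big/\RND{P^{(Q,\lambda_2)}_{\vect{\Theta}\mid\vect{Z}_2=\vect{z}_2}}{Q} = 1
\]
on $\supp Q$, which is \eqref{EqJanuary1at16h30in2026atHomeNice}.
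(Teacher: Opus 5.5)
Your proof is correct and follows the paper's route: you show that the unlearned measure has the same density with respect to $Q$ as $P^{(Q,\lambda_2)}_{\Theta\mid Z_2=z_2}$, then conclude via mutual absolute continuity and the chain rule for Radon--Nikodym derivatives. The differences are that you derive inline the special case of Theorem~\ref{TheoremJan6at20h59in2026Antibes}, which the paper imports from an external report, using $\frac{\mathrm{d}P_1}{\mathrm{d}Q}=\frac{\mathrm{d}P_1}{\mathrm{d}P_0}\frac{\mathrm{d}P_0}{\mathrm{d}Q}$ and Lemma~\ref{LemmaJan9at11h49in2026Sophia}, and that you also verify the integrability condition for $\lambda_1<0$, which the paper simply assumes; this check is a worthwhile addition.
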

\begin{IEEEproof}
The proof is presented in Section~\ref{SecProofofTheoremDec30at21h00in2025inAntibes}.
\end{IEEEproof}

Theorem~\ref{TheoremDec30at21h00in2025inAntibes} shows that the probability measure~$P^{\autoparent{Q,\lambda_0}}_{\vect{\Theta}\mid \vect{Z}_0=\vect{z}_0}$ in~\eqref{EqJanuary12at11h30in2026Sophia} can be transformed into 
$P^{\autoparent{Q, \lambda_2}}_{\vect{\Theta} \mid \vect{Z}_2 = \vect{z}_2}$ in~\eqref{EqJanuary12at11h45in2026Sophia} via the optimization problem in~\eqref{EqJan6at15h24in2026Sophia}, achieving exact unlearning in the sense of Definition~\ref{DefJanuary16at19h47in2026Sophia}. 
An important observation is that only the dataset to be unlearned, i.e., dataset~$\vect{z}_1$, is used in the optimization problem in~\eqref{EqJan6at15h24in2026Sophia}. The dependence of such an optimization problem on the dataset to be retained, i.e.,~$\vect{z}_2$,  is via the reference measure $P^{\autoparent{Q,\lambda_0}}_{\vect{\Theta}\mid \vect{Z}_0=\vect{z}_0}$, which represents the Gibbs algorithm from which the dataset~$\vect{z}_1$ must be unlearned.  
Hence, the optimization problem in \eqref{EqJan6at15h24in2026Sophia} implies that unlearning the dataset~$\vect{z}_1$ from the Gibbs algorithm~$P^{\autoparent{Q, \lambda_0}}_{\vect{\Theta} \mid \vect{Z}_0 = \vect{z}_0}$ is achieved through training a new Gibbs algorithm exclusively upon the dataset to be unlearned, i.e., dataset~$\vect{z}_1$, while using as reference measure the Gibbs algorithm~$P^{\autoparent{Q, \lambda_0}}_{\vect{\Theta} \mid \vect{Z}_0 = \vect{z}_0}$ itself.
Interestingly, this establishes a proof of exact unlearning as, under the scaling in \eqref{EqJanuary14at10h32in2026inNice}, the~$(\mathsf{L}_1, P^{\autoparent{Q,\lambda_0}}_{\vect{\Theta}\mid \vect{Z}_0=\vect{z}_0},\lambda_1)$-Gibbs  probability measure~$P^{\autoparent{P^{\autoparent{Q, \lambda_0}}_{\vect{\Theta} \mid \vect{Z}_0 = \vect{z}_0},\lambda_1}}_{\vect{\Theta} \mid \vect{Z}_1 = \vect{z}_1}$ in~\eqref{EqJanuary14at9h22in2026HomeNice} coincides with the Gibbs measure that would have been obtained by training only on the retained dataset~$\vect{z}_2$, i.e., the measure~$P^{\autoparent{Q, \lambda_2}}_{\vect{\Theta} \mid \vect{Z}_2 = \vect{z}_2}$ in~\eqref{EqJanuary12at11h45in2026Sophia}.  

Another consequence of Theorem~\ref{TheoremDec30at21h00in2025inAntibes}, and the uniqueness of the solutions to \eqref{EqJul25at9h55in2025NiceA} and \eqref{EqJan6at15h24in2026Sophia} implied by Lemma~\ref{LemmaJan8at15h57in2026inSophia}, is that
\begin{IEEEeqnarray}{rCl}
\nonumber
& &\argmax_{P \in \simplexabs{Q}{\set{M}}} \mathsf{R}_{\vect{z}_1} \autoparent{P} -\frac{n_0}{n_1} \lambda_0\KL{P}{P^{\autoparent{Q,\lambda_0}}_{\vect{\Theta}\mid \vect{Z}_0=\vect{z}_0}} \\
\label{EqJanuary14at11h37in2026Nice}
& = &\argmin_{P \in \simplexabs{Q}{\set{M}}} 
\mathsf{R}_{\vect{z}_2} \autoparent{P} +\frac{n_0}{n_2} \lambda_0 \KL{P}{Q}.
\end{IEEEeqnarray}
From the perspective of Lemma~\ref{LemmaJanuary1at14h24in2026}, the equality in~\eqref{EqJanuary14at11h37in2026Nice} implies that 
the probability measure in $\triangle_Q\left(\set{M}\right)$ that minimizes the expected empirical risk with respect to $\vect{z}_2$ (dataset to be retained) over all measures in the set $\mathcal{G}_1$ in
\eqref{EqJanuary143at56h26in2026Nice} is identical to the measure that maximizes the expected empirical risk with respect to $\vect{z}_1$ (dataset to be unlearned) over all measures in the set 
\begin{IEEEeqnarray}{rCl}
\nonumber
& &\mathcal{G}_2 \triangleq
 \Bigg\lbrace P \in \triangle_{Q}{\left( \set{M} \right)} : \\
\label{EqJanuary13at16h26in2026BusToNice}
& &  \KL{P}{P^{\autoparent{Q, \lambda_0}}_{\vect{\Theta} \mid \vect{Z}_0 = \vect{z}_0}} \leqslant \KL{P^{\autoparent{P^{\autoparent{Q, \lambda_0}}_{\vect{\Theta} \mid \vect{Z}_0 = \vect{z}_0},-\frac{n_0}{n_1}\lambda_0}}_{\vect{\Theta} \mid \vect{Z}_1 = \vect{z}_1}}{P^{\autoparent{Q, \lambda_0}}_{\vect{\Theta} \mid \vect{Z}_0 = \vect{z}_0}}\Bigg\rbrace.\spnum
\end{IEEEeqnarray}
\enlargethispage{-0.10in}
Figure~\ref{FigGeomNestedGibbs} provides a representation of~\eqref{EqJanuary14at11h37in2026Nice} under the assumption in~\eqref{EqJanuary14at10h32in2026inNice}: it depicts how the common optimizer in~\eqref{EqJanuary14at11h37in2026Nice} can be equivalently characterized either as an empirical-risk minimizer over $\mathcal{G}_1$ in~\eqref{EqJanuary143at56h26in2026Nice} (relative to $\vect{z}_2$) or as an empirical-risk maximizer over $\mathcal{G}_2$ in~\eqref{EqJanuary13at16h26in2026BusToNice} (relative to $\vect{z}_1$), while also highlighting the relation with $\mathcal{G}_0$ in~\eqref{EqJanuary13at16h25in2026BusToNice}. In particular, the figure shows the sets~$\mathcal{G}_0$, $\mathcal{G}_1$, and $\mathcal{G}_2$ together with the probability measures~$P^{\autoparent{Q, \lambda_0}}_{\vect{\Theta} \mid \vect{Z}_0 = \vect{z}_0}$ in~\eqref{EqJanuary12at11h30in2026Sophia}, $P^{\autoparent{Q, \lambda_2}}_{\vect{\Theta} \mid \vect{Z}_2 = \vect{z}_2}$ in~\eqref{EqJanuary12at11h45in2026Sophia}, and  $P^{\autoparent{P^{\autoparent{Q, \lambda_0}}_{\vect{\Theta} \mid \vect{Z}_0 = \vect{z}_0},\lambda_1}}_{\vect{\Theta} \mid \vect{Z}_1 = \vect{z}_1}$ in~\eqref{EqJanuary14at9h22in2026HomeNice}.
In such a figure, the following notation is used:
\begin{IEEEeqnarray}{rCl}\label{EqJan16at7h01in2026inSophiaA}
\alpha_0 &\triangleq& \KL{P^{\autoparent{Q, \lambda_0}}_{\vect{\Theta} \mid \vect{Z}_0 = \vect{z}_0}}{Q};\\
\label{EqJan16at7h01in2026inSophiaB}
\alpha_1 &\triangleq& \KL{P^{\autoparent{Q, \frac{n_0}{n_2}\lambda_{0}}}_{\vect{\Theta} \mid \vect{Z}_2 = \vect{z}_2}}{Q}; \mbox{ and }\\
\label{EqJan16at7h01in2026inSophiaC}
\alpha_2 &\triangleq& \KL{P^{\autoparent{P^{\autoparent{Q, \lambda_0}}_{\vect{\Theta} \mid \vect{Z}_0 = \vect{z}_0},-\frac{n_0}{n_1}\lambda_{0}}}_{\vect{\Theta} \mid \vect{Z}_1 = \vect{z}_1}}{P^{\autoparent{Q, \lambda_0}}_{\vect{\Theta} \mid \vect{Z}_0 = \vect{z}_0}},
\end{IEEEeqnarray}
under the assumption that $\alpha_0 < \alpha_1$.

Finally, note that the parameter $\lambda_0 > 0$ remains a free parameter that can be further optimized, e.g., to minimize the generalization error of the resulting Gibbs algorithm $P^{\autoparent{P^{\autoparent{Q, \lambda_0}}_{\vect{\Theta} \mid \vect{Z}_0 = \vect{z}_0},-\frac{n_0}{n_1}\lambda_0}}_{\vect{\Theta} \mid \vect{Z}_1 = \vect{z}_1}$. See for instance \cite{perlaza2024generalization,aminian2024information,zouJSAIT2024,aminian2021exact,xu2017information} and references therein for this purpose.

\begin{figure}[t]
\centering
 \begin{tikzpicture}[scale=1.05, line cap=round, line join=round]
  \coordinate (Q0) at (0,0);
  \coordinate (P0) at (1.0,0.10);
  \coordinate (P2) at (1.5,1.5);

  \draw[thick, black]
    let \p1=(Q0), \p2=(P0), \n1={veclen(\x2-\x1,\y2-\y1)} in
  (Q0) ellipse ({\n1} and {\n1});

  \draw[dashed, thick, red!70]
    let \p1=(Q0), \p2=(P2), \n1={veclen(\x2-\x1,\y2-\y1)} in
  (Q0) ellipse ({\n1} and {\n1});

  \draw[dotted, thick, blue!70]
    let \p1=(P0), \p2=(P2), \n1={veclen(\x2-\x1,\y2-\y1)} in
  (P0) ellipse ({\n1} and {\n1});

  \draw[thick, black]   (Q0) -- node[midway, fill=white, inner sep=1pt] {$\alpha_0$} (P0);
  \draw[thick, red!70]  (Q0) -- node[midway, fill=white, inner sep=1pt, text=red!70] {$\alpha_1$} (P2);
  \draw[thick, blue!70] (P0) -- node[midway, fill=white, inner sep=1pt, text=blue!70] {$\alpha_2$} (P2);

  \node[cross out, draw=black, thick, inner sep=1.6pt] at (Q0) {};
  \node[cross out, draw=blue!70, thick, inner sep=1.6pt] at (P0) {};
  \node[cross out, draw=red!70, thick, inner sep=1.8pt] at (P2) {};
  \node[cross out, draw=green!50!black, thick, inner sep=0.9pt] at (P2) {};

  \node[] at (-0.7,0.3) {$\mathcal{G}_0$};
  \node[below right=0pt] at (Q0) {$Q$};

  \node[text=blue!70] at (0.5,1.2) {$\mathcal{G}_2$};
  \node[below right=-5pt, text=blue!70] at (P0)
    {$P^{(Q,\lambda_0)}_{\vect{\Theta}\mid \vect{Z}_0=\vect{z}_0}$};

  \node[text=red!70] at (-1,1) {$\mathcal{G}_1$};
  \node[above right, text=red!70] at (P2)
    {$P^{\!\left(Q,\frac{n_0}{n_2}\lambda_0\right)}_{\vect{\Theta}\mid \vect{Z}_2=\vect{z}_2}
    \textcolor{black}{ = }
    \textcolor{green!50!black}{P^{\!\left(P^{(Q,\lambda_0)}_{\vect{\Theta}\mid \vect{Z}_0=\vect{z}_0},-\frac{n_0}{n_1}\lambda_0\right)}_{\vect{\Theta}\mid \vect{Z}_1=\vect{z}_1}}$};
\end{tikzpicture}
\caption{Representation, under the assumption in~\eqref{EqJanuary14at10h32in2026inNice}, of the set $\mathcal{G}_0$ in~\eqref{EqJanuary13at16h25in2026BusToNice} as a solid black circle; the set $\mathcal{G}_1$ in~\eqref{EqJanuary143at56h26in2026Nice} as a dashed red circle; and the set $\mathcal{G}_2$ in~\eqref{EqJanuary13at16h26in2026BusToNice} as a dotted blue circle. The involved probability measures are defined in~\eqref{EqJanuary12at11h30in2026Sophia}, \eqref{EqJanuary12at11h45in2026Sophia}, and~\eqref{EqJanuary14at9h22in2026HomeNice}. The definitions of $\alpha_0,\alpha_1,$ and $\alpha_2$ are in~\eqref{EqJan16at7h01in2026inSophiaA}, \eqref{EqJan16at7h01in2026inSophiaB} and \eqref{EqJan16at7h01in2026inSophiaC}, respectively.}
\label{FigGeomNestedGibbs}
\vspace{-3ex}
\end{figure}

\subsection{Proof of Theorem~\ref{TheoremDec30at21h00in2025inAntibes}}\label{SecProofofTheoremDec30at21h00in2025inAntibes}

The proof of Theorem~\ref{TheoremDec30at21h00in2025inAntibes} relies on the following result.
\enlargethispage{-0.10in}
\begin{theorem}
\label{TheoremJan6at20h59in2026Antibes}
The~$(\mathsf{L}_1, P^{\autoparent{Q,\lambda_0}}_{\vect{\Theta}\mid \vect{Z}_0=\vect{z}_0},\lambda_1)$-Gibbs  probability measure~$P^{\autoparent{P^{\autoparent{Q, \lambda_0}}_{\vect{\Theta} \mid \vect{Z}_0 = \vect{z}_0},\lambda_1}}_{\vect{\Theta} \mid \vect{Z}_1 = \vect{z}_1}$ in~\eqref{EqJanuary14at9h22in2026HomeNice}, with $\lambda_0 > 0$ and $\lambda_1 \in \reals\setminus\lbrace 0 \rbrace$, satisfies for all~$\vect{\theta}\in\supp Q$,  
\begin{IEEEeqnarray}{rCl}
\label{EqJan2at19h24in2026inLaCadiere}
\nonumber
&&\RND{P^{\autoparent{P^{\autoparent{Q, \lambda_0}}_{\vect{\Theta} \mid \vect{Z}_0 = \vect{z}_0}, \lambda_1}}_{\vect{\Theta} \mid \vect{Z}_1 = \vect{z}_1}}{Q}(\vect{\theta}) \\
&=&  \frac{ \exp\autoparent{ \frac{-1}{\lambda_0} \autoparent{\autoparent{\frac{\lambda_0}{\lambda_1} + \frac{n_1}{ n_0}}\EmpRisk{1}{\vect{z}_1}{\vect{\theta}} + \frac{n_2}{n_0}\EmpRisk{2}{\vect{z}_2}{\vect{\theta}}}}} { \int \exp\autoparent{ \frac{-1}{\lambda_0} \autoparent{\autoparent{\frac{\lambda_0}{\lambda_1} + \frac{n_1}{n_0}}\EmpRisk{1}{\vect{z}_1}{\vect{\nu}} + \frac{n_2}{n_0}\EmpRisk{2}{\vect{z}_2}{\vect{\nu}}}}\mathrm{d} Q \autoparent{\vect{\nu}}}, \squeezeequ \spnum
\end{IEEEeqnarray}
where the function $\mathsf{L}_{2}$ is defined in \eqref{EqEmpiricalRiskk}.
\end{theorem}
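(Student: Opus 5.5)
The plan is to apply the chain rule for Radon--Nikodym derivatives and then substitute the two Gibbs densities that are already available. Write $P_0 \triangleq P^{\autoparent{Q,\lambda_0}}_{\vect{\Theta}\mid\vect{Z}_0=\vect{z}_0}$ and $P_1 \triangleq P^{\autoparent{P_0,\lambda_1}}_{\vect{\Theta}\mid\vect{Z}_1=\vect{z}_1}$.

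First, observe that $P_1 \ll P_0 \ll Q$. The density of $P_0$ with respect to $Q$ in~\eqref{EqJanuary14at09h53in2026Nice} is strictly positive on $\supp Q$, because $\mathsf{L}_0$ is finite. Hence $\supp P_0 = \supp Q$ (up to $Q$-null sets), and the chain rule gives, for $Q$-almost every $\vect{\theta}$,
\begin{IEEEeqnarray*}{rCl}
\RND{P_1}{Q}(\vect{\theta}) & = & \RND{P_1}{P_0}(\vect{\theta})\,\RND{P_0}{Q}(\vect{\theta}).
\end{IEEEeqnarray*}
The factor $\RND{P_1}{P_0}$ is given by Definition~\ref{DefGibbsModels} with reference measure $P_0$, risk $\mathsf{L}_1$ and parameter $\lambda_1$. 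It is proportional to $\exp\autoparent{-\frac{1}{\lambda_1}\mathsf{L}_1(\vect{z}_1,\vect{\theta})}$. The factor $\RND{P_0}{Q}$ is proportional to $\exp\autoparent{-\frac{1}{\lambda_0}\mathsf{L}_0(\vect{z}_0,\vect{\theta})}$.

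Second, apply Lemma~\ref{LemmaJan9at11h49in2026Sophia} with $K=2$ to write $\mathsf{L}_0 = \frac{n_1}{n_0}\mathsf{L}_1 + \frac{n_2}{n_0}\mathsf{L}_2$. Factoring $-\frac{1}{\lambda_0}$ out of the combined exponent gives
\[
-\frac{1}{\lambda_0}\autoparent{\autoparent{\frac{\lambda_0}{\lambda_1}+\frac{n_1}{n_0}}\mathsf{L}_1 + \frac{n_2}{n_0}\mathsf{L}_2},
\]
which is the numerator of~\eqref{EqJan2at19h24in2026inLaCadiere}.

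Third, identify the normalizing constant. The product of the two normalizers is
\[
\int \exp\autoparent{-\tfrac{1}{\lambda_1}\mathsf{L}_1}\,\mathrm{d}P_0 \cdot \int \exp\autoparent{-\tfrac{1}{\lambda_0}\mathsf{L}_0}\,\mathrm{d}Q.
\]
Rewriting the first integral against $Q$ through $\RND{P_0}{Q}$, the constant of $P_0$ cancels, and the product reduces to
\[
\int \exp\autoparent{-\tfrac{1}{\lambda_0}\autoparent{\autoparent{\tfrac{\lambda_0}{\lambda_1}+\tfrac{n_1}{n_0}}\mathsf{L}_1+\tfrac{n_2}{n_0}\mathsf{L}_2}}\,\mathrm{d}Q,
\]
which is the denominator in~\eqref{EqJan2at19h24in2026inLaCadiere}. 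Alternatively, both sides integrate to one under $Q$, so the constants must agree.

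The only delicate point, and the main obstacle, is finiteness and positivity of these integrals when $\lambda_1<0$. In that case the exponent $-\frac{1}{\lambda_1}\mathsf{L}_1$ is nonnegative and may be unbounded. Finiteness is exactly condition~\eqref{EqJan16at18h37in2026Sophia} for the reference $P_0$, which the paper assumes when choosing $\lambda_1$. After the change of measure above, this assumption is equivalent to finiteness of the denominator in~\eqref{EqJan2at19h24in2026inLaCadiere}. Positivity holds because the integrand is strictly positive and $Q$ is nonzero. These two facts make all the manipulations legitimate, and the identity holds for every $\vect{\theta}\in\supp Q$ as stated.
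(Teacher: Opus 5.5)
Your proof is correct. The chain rule $\frac{\mathrm{d}P_1}{\mathrm{d}Q}=\frac{\mathrm{d}P_1}{\mathrm{d}P_0}\frac{\mathrm{d}P_0}{\mathrm{d}Q}$, the split $\mathsf{L}_0=\frac{n_1}{n_0}\mathsf{L}_1+\frac{n_2}{n_0}\mathsf{L}_2$ from Lemma~\ref{LemmaJan9at11h49in2026Sophia}, and the cancellation of the normalizer of $P_0$ give the claimed density; when $\lambda_1<0$, finiteness of $\int \exp(-\frac{1}{\lambda_1}\mathsf{L}_1)\,\mathrm{d}P_0$ is exactly the standing assumption~\eqref{EqJan16at18h37in2026Sophia}. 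The paper does not prove this result in the text (it defers to \cite{InriaRR9610}), but your direct argument uses the same tools the paper invokes in the proof of Theorem~\ref{TheoremDec30at21h00in2025inAntibes}, namely mutual absolute continuity of Gibbs measures and the chain rule for Radon--Nikodym derivatives.
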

\begin{IEEEproof}
The proof is presented in \cite{InriaRR9610}.
\end{IEEEproof}

Theorem~\ref{TheoremJan6at20h59in2026Antibes} makes explicit the role of~$\lambda_1$ as a reweighting parameter of the contribution of the dataset $\vect{z}_1$.
The exact unlearning identity in
Theorem~\ref{TheoremDec30at21h00in2025inAntibes} follows by choosing~$\lambda_1$ to completely eliminate the contribution of dataset $\vect{z}_1$. That is, $\lambda_1$ is chosen such that
\vspace{-1ex}
\begin{IEEEeqnarray}{rCl}
\label{EqJanuary15at13h53in2026Sophia}
\frac{\lambda_0}{\lambda_1} + \frac{n_1}{n_0} & = & 0.
\end{IEEEeqnarray}
More specifically, using~\eqref{EqDec30at15h43in2025Antibes} and~\eqref{EqJanuary15at13h53in2026Sophia} in~\eqref{EqJan2at19h24in2026inLaCadiere} yields, 
\begin{IEEEeqnarray}{rCl}
\label{EqJanuary15at13h55in2026Sophia}
 \RND{P^{\autoparent{P^{\autoparent{Q, \lambda_0}}_{\vect{\Theta} \mid \vect{Z}_0 = \vect{z}_0}, -\frac{n_0}{n_1}\lambda_0}}_{\vect{\Theta} \mid \vect{Z}_1 = \vect{z}_1}}{Q}(\vect{\theta})  & = & \RND{P^{\autoparent{Q, \frac{n_0}{n_2}\lambda_0}}_{\vect{\Theta} \mid \vect{Z}_2 = \vect{z}_2}}{Q}(\vect{\theta}).
\end{IEEEeqnarray}
From~\cite[Lemma~3]{perlaza2024empirical} and \eqref{EqJanuary15at13h55in2026Sophia}, it follows that 
$Q \ll P^{\autoparent{Q, \lambda_0}}_{\vect{\Theta} \mid \vect{Z}_0 = \vect{z}_0} \ll P^{\autoparent{P^{\autoparent{Q, \lambda_0}}_{\vect{\Theta} \mid \vect{Z}_0 = \vect{z}_0}, -\frac{n_0}{n_1}\lambda_0}}_{\vect{\Theta} \mid \vect{Z}_1 = \vect{z}_1}  \ll P^{\autoparent{Q, \frac{n_0}{n_2}\lambda_0}}_{\vect{\Theta} \mid \vect{Z}_2 = \vect{z}_2} \ll Q.
$
%
Hence, from~\eqref{EqJanuary15at13h55in2026Sophia}, it follows that
\begin{IEEEeqnarray}{rCl}
1
\label{EqJanuary15at14h29in2026SophiaA}
& = & \RND{P^{\autoparent{P^{\autoparent{Q, \lambda_0}}_{\vect{\Theta} \mid \vect{Z}_0 = \vect{z}_0}, -\frac{n_0}{n_1}\lambda_0}}_{\vect{\Theta} \mid \vect{Z}_1 = \vect{z}_1}}{Q}(\vect{\theta})  \RND{Q}{P^{\autoparent{Q, \frac{n_0}{n_2}\lambda_0}}_{\vect{\Theta} \mid \vect{Z}_2 = \vect{z}_2}}(\vect{\theta})\\
\label{EqJanuary15at14h26in2026Sophia}
& = & \RND{P^{\autoparent{P^{\autoparent{Q, \lambda_0}}_{\vect{\Theta} \mid \vect{Z}_0 = \vect{z}_0}, -\frac{n_0}{n_1}\lambda_0}}_{\vect{\Theta} \mid \vect{Z}_1 = \vect{z}_1}}{P^{\autoparent{Q, \frac{n_0}{n_2}\lambda_0}}_{\vect{\Theta} \mid \vect{Z}_2 = \vect{z}_2}}(\vect{\theta}),
 \end{IEEEeqnarray}
where the equality in~\eqref{EqJanuary15at14h29in2026SophiaA} follows from~\cite[Theorem~5]{InriaRR9591}; and 
the equality in~\eqref{EqJanuary15at14h26in2026Sophia} follows from~\cite[Theorem~4]{InriaRR9591}.
This completes the proof of Theorem~\ref{TheoremDec30at21h00in2025inAntibes}.

\subsection{Reweighting Data Points in ERM-RER}

Beyond the specific choices of $\lambda_1$ and $\lambda_2$ in \eqref{EqJanuary14at10h32in2026inNice}, in general if~$\lambda_0 > 0$ and~$\lambda_1 \in  \reals\setminus\lbrace 0 \rbrace$,  Theorem~\ref{TheoremJan6at20h59in2026Antibes} unveils the following observation:   
\begin{IEEEeqnarray}{rCl}
\nonumber
&  &\argmin_{P \in \simplexabs{Q}{\set{M}}} \autoparent{\frac{\lambda_0}{\lambda_1} + \frac{n_1}{n_0}} \mathsf{R}_{\vect{z}_1} (P) +   \frac{n_2}{n_0}\mathsf{R}_{\vect{z}_2}(P) + \lambda_0 \KL{P}{Q} \\
\label{EqJanuary14at15h42in2026Nice}
& = &\argmin_{P \in \simplexabs{Q}{\set{M}}} \frac{\lambda_0}{\lambda_1} \mathsf{R}_{\vect{z}_1} (P) + \mathsf{R}_{\vect{z}_0}(P) + \lambda_0 \KL{P}{Q} \\
\label{EqJan3at15h24in2026LaCadiere}
&  = &\left\lbrace
\begin{array}{rc}
\displaystyle\argmin_{P \in \triangle_{Q} \autoparent{\set{M}}}  \mathsf{R}_{\vect{z}_1} \left( P \right)
+ \lambda_1 \KL{P}{P^{\autoparent{Q, \lambda_0}}_{\vect{\Theta} \mid \vect{Z}_0 = \vect{z}_0}} &\text{if } \lambda_1 > 0;\\
\displaystyle\argmax_{P \in \triangle_{Q} \autoparent{\set{M}}}  \mathsf{R}_{\vect{z}_1} \left( P \right)
+ \lambda_1 \KL{P}{P^{\autoparent{Q, \lambda_0}}_{\vect{\Theta} \mid \vect{Z}_0 = \vect{z}_0}} &\text{if } \lambda_1 < 0
\end{array}
\right. \\
\label{EqJanuary14at15h46in2026Nice}
& = & \left\lbrace P^{\autoparent{P^{\autoparent{Q, \lambda_0}}_{\vect{\Theta} \mid \vect{Z}_0 = \vect{z}_0},\lambda_1}}_{\vect{\Theta} \mid \vect{Z}_1 = \vect{z}_1} \right\rbrace,
\end{IEEEeqnarray} 
where the equality in \eqref{EqJanuary14at15h42in2026Nice} follows from the fact that for all $P  \in \triangle_{Q}\left( \set{M} \right)$, 
\begin{IEEEeqnarray}{rCl}
\mathsf{R}_{\vect{z}_0} (P)  & = & \frac{n_1}{n_0} \mathsf{R}_{\vect{z}_1} (P) +   \frac{n_2}{n_0}\mathsf{R}_{\vect{z}_2}(P) ; 
\vspace{1ex}
\end{IEEEeqnarray}
the equality in~\eqref{EqJan3at15h24in2026LaCadiere} follows from Theorem~\ref{TheoremJan6at20h59in2026Antibes} and Lemma~\ref{LemmaJan8at15h57in2026inSophia}; and 
the equality in~\eqref{EqJanuary14at15h46in2026Nice}  follows from  Lemma~\ref{LemmaJan8at15h57in2026inSophia}.
\enlargethispage{-\baselineskip}
The equality in \eqref{EqJanuary14at15h42in2026Nice} highlights that both $\lambda_0$ and $\lambda_1$ reweight the dataset~$\vect{z}_1$ in the optimization problem whose solution is the measure~$P^{\autoparent{P^{\autoparent{Q, \lambda_0}}_{\vect{\Theta} \mid \vect{Z}_0 = \vect{z}_0},\lambda_1}}_{\vect{\Theta} \mid \vect{Z}_1 = \vect{z}_1}$ in~\eqref{EqJanuary14at9h22in2026HomeNice}, relative to the optimization problem whose solution is~$P^{\autoparent{Q, \lambda_0}}_{\vect{\Theta} \mid \vect{Z}_0 = \vect{z}_0}$ in~\eqref{EqJanuary12at11h30in2026Sophia}. 
More specifically, from Lemma~\ref{LemmaJan8at15h57in2026inSophia}, it holds that
\begin{IEEEeqnarray}{rCl}
\nonumber
\autobrace{P^{\autoparent{Q, \lambda_0}}_{\vect{\Theta} \mid \vect{Z}_0 = \vect{z}_0}} & = &\argmin_{P \in \simplexabs{Q}{\set{M}}} \Bigg( \frac{n_1}{n_0}  \mathsf{R}_{\vect{z}_1} \autoparent{P}+ \frac{n_2}{n_0} \mathsf{R}_{\vect{z}_2} \autoparent{P}\Bigg)\\
\label{EqJan9at8h28in2026Sophia}
&&+ \lambda_0 \KL{P}{Q},
\end{IEEEeqnarray}
and from~\eqref{EqJanuary14at15h46in2026Nice}, it holds that
\begin{IEEEeqnarray}{rcl}
\nonumber
\autobrace{P^{\autoparent{P^{\autoparent{Q, \lambda_0}}_{\vect{\Theta} \mid \vect{Z}_0 = \vect{z}_0},\lambda_1}}_{\vect{\Theta} \mid \vect{Z}_1 = \vect{z}_1}} & =  &\argmin_{P \in \simplexabs{Q}{\set{M}}}
 \autoparent{\frac{\lambda_0}{\lambda_1} + \frac{n_1}{n_0}} \mathsf{R}_{\vect{z}_1} (P) +   \frac{n_2}{n_0}\mathsf{R}_{\vect{z}_2}(P) \\
\label{EqJanuary15at10h53in2026Sophia}
 & & + \lambda_0 \KL{P}{Q}.
\end{IEEEeqnarray}

The contribution of the dataset $\vect{z}_1$ in~\eqref{EqJan9at8h28in2026Sophia} is proportional to~$\frac{n_1}{n_0}$, while in~\eqref{EqJanuary15at10h53in2026Sophia}, it is proportional to $\frac{\lambda_0}{\lambda_1} + \frac{n_1}{n_0}$. 
Interestingly, when $\lambda_1 > 0$, the contribution of the dataset $\vect{z}_1$ is up-weighted in~\eqref{EqJanuary15at10h53in2026Sophia}. That is, 
$\frac{\lambda_0}{\lambda_1} + \frac{n_1}{n_0} > \frac{n_1}{n_0}$.
The converse holds when $\lambda_1 < 0$.
It is important to highlight that the contribution of the dataset $\vect{z}_2$ remains invariant in both \eqref{EqJan9at8h28in2026Sophia} and~\eqref{EqJanuary15at10h53in2026Sophia}.

\section{Conclusions and Final Remarks}
\label{SecConclusion}

In this work, an exact unlearning method for Gibbs algorithms has been proposed in  Theorem~\ref{TheoremDec30at21h00in2025inAntibes}. The method consists in implementing a new  Gibbs algorithm via the maximization of the empirical risk with respect to the dataset to be unlearned using as reference measure the algorithm from which data must be unlearned. 
Under the corresponding scaling, this construction guarantees exact unlearning in the sense that the new Gibbs algorithm coincides with the Gibbs algorithm that would have been obtained by retraining from scratch exclusively on the dataset to be retained. 
An important feature is that the corresponding optimization problem depends exclusively on the dataset to be unlearned, while the dependence on the dataset to be retained is captured solely through the reference measure.
The proof of Theorem~\ref{TheoremDec30at21h00in2025inAntibes} relies on Theorem~\ref{TheoremJan6at20h59in2026Antibes}, which can be placed in a more general scenario than the problem of unlearning.  
In particular, it can be placed in the scenario of reweighting data points in ERM-RER~\cite{perlaza2024empirical}. From this perspective, unlearning appears as a special case in which zero weight is assigned to the contribution of the data points to be unlearned. 
More broadly, this reweighting offers a principled mechanism to modulate the contribution of particular data points, which can be leveraged to improve the generalization error by attenuating the effect of atypical or overly influential data points. Conversely, the same mechanism can be used adversarially: by up-weighting outliers, an attacker may deliberately increase the generalization error of Gibbs algorithms, yielding a model-poisoning strategy within the ERM-RER learning framework.
The strategic choice of the parameters in Theorem~\ref{TheoremJan6at20h59in2026Antibes}, beyond unlearning, is left out of the scope of this paper. 
Nonetheless, the theorem provides a unifying theoretical foundation for exact unlearning and data points reweighting in ERM-RER. This opens the door to developing both robustness-oriented reweighting and its adversarial counterpart in statistical machine learning.
\vspace{3ex}

\IEEEtriggeratref{15}
\bibliographystyle{\Latexfilepath/Bibliography/IEEEtranlink}
\bibliography{\Latexfilepath/Bibliography/Merged_BERMUDEZ_ref.bib}
 
\appendices
\end{document}